\documentclass[11pt]{article}

\usepackage[margin=1in]{geometry}
\usepackage{amsmath,amssymb,amsthm}
\usepackage[numbers]{natbib}

\newtheorem{theorem}{Theorem}

\title{On the Identifiability of User Adaptation in Co-Adaptive Neural Interfaces}
\author{Philip Waggoner \\ Stanford University \\ \texttt{pdw2119@stanford.edu}}
\date{}

\begin{document}

\maketitle

\section{Introductory Remarks}

Recently, Madduri et al.~\citep{madduri2026computational} presented an elegant experimental and theoretical framework for studying co-adaptive neural interfaces, in which a human user and a machine decoder jointly adapt in closed loop. The authors combine a game-theoretic formulation with an empirical design based on electromyographic (EMG) control. They show that decoder learning rates and regularization influence both performance and observed user behavior. This is a very important contribution. Buiulding on that contribution, we offer a clarification regarding what aspects of the human-machine system are identified by the experimental design. Of note, we show that while the Madduri et al. estimation procedure recovers joint closed-loop dynamics, it does not, in general, uniquely identify the user-side adaptation mechanism. This distinction, though subtle, is quite important for interpreting claims about predicting or shaping human behavior, in line with the broader goal of their study.

\section{Estimation Problem Setup}

We can start by stating the core empirical structure in compact form. Let $x_t \in \mathbb{R}^d$ denote task variables (e.g., target and cursor state), $u_t \in \mathbb{R}^m$ the user's motor output (EMG), and $y_t \in \mathbb{R}^k$ the controlled output (cursor velocity). The decoder $D_t \in \mathbb{R}^{k \times m}$ maps motor output to cursor motion:
\begin{equation}
  y_t = D_t u_t.
  \label{eq:decoder}
\end{equation}
The user implements an unknown control policy $U_t$, which we may write abstractly as
\begin{equation}
  u_t = U_t(x_t, \xi_t),
  \label{eq:user_policy}
\end{equation}
where $\xi_t$ captures internal noise or latent state. The empirical procedure estimates an ``encoder'' $\hat E_t \in \mathbb{R}^{m \times d}$ over a finite window $\mathcal{W}_t$ by linear regression:
\begin{equation}
  \hat E_t = \arg\min_E \sum_{s \in \mathcal{W}_t} \|u_s - E x_s\|^2.
  \label{eq:encoder_est}
\end{equation}
The decoder remains fixed within $\mathcal{W}_t$, but is updated across windows, producing a sequence $(\hat E_t, D_t)$. The analysis then interprets variation in $\hat E_t$ as evidence of user adaptation and relates these dynamics to a scalar game-theoretic model in which user and decoder minimize coupled objective functions reflecting task performance and effort costs \citep{madduri2026computational}.

\section{Identifiability Problem}

The key question is whether the sequence $\{\hat E_t\}$ identifies the latent user policy sequence $\{U_t\}$. We show that this is \textit{not generically} the case. Define the mapping
\begin{equation}
  \mathcal{M}: \{U_t, D_t, \mathcal{W}_t\} \mapsto \{\hat E_t\},
  \label{eq:mapping}
\end{equation}
which takes the latent policy, decoder trajectory, and windowing scheme to the sequence of estimated encoders. Identification of $U_t$ requires that $\mathcal{M}$ be injective in $U_t$ (up to equivalence). However, in closed-loop settings, $\mathcal{M}$ is typically many-to-one, as identifiability in these settings generally fails without additional excitation conditions (see, e.g., \citep{ljung1998system}). Intuitively, $\hat E_t$ depends not only on the mapping $U_t$, but also on the distribution of regressors $x_s$ induced by the evolving closed-loop system. So, changes in $D_t$ alter the state distribution, which in turn affects $\hat E_t$, even if $U_t$ is fixed. Thus, variation in $\hat E_t$ cannot be uniquely attributed to changes in $U_t$ without additional assumptions.

\section{A Non-Identifiability Result}

To make the identifiability problem clearer, we now formalize the identifiability limitation with a minimal result.

\begin{theorem}[Non-identifiability of user adaptation under closed-loop regression]
Consider a closed-loop system defined by Equations~\eqref{eq:decoder}-\eqref{eq:user_policy}, with encoder estimates $\hat E_t$ computed via Equation~\eqref{eq:encoder_est} over finite windows $\mathcal{W}_t$. Suppose that:
\begin{enumerate}
  \item The decoder sequence $\{D_t\}$ is time-varying,
  \item The task state $\{x_t\}$ evolves according to dynamics of the form
  \begin{equation}
    x_{t+1} = F_t(x_t, D_t, U_t) + \epsilon_t,
    \label{eq:general_dynamics}
  \end{equation}
  where $\epsilon_t$ is stochastic noise,
  \item The encoder estimates $\hat E_t$ are obtained from finite samples within each window $\mathcal{W}_t$.
\end{enumerate}
Then, in general, the mapping
\[
  \mathcal{M}: \{U_t\} \mapsto \{\hat E_t\}
\]
is not injective. That is, there exist distinct user policy sequences $\{U_t\}$ and $\{\tilde U_t\}$ such that
\[
  \mathcal{M}(\{U_t\}) = \mathcal{M}(\{\tilde U_t\}).
\]
\end{theorem}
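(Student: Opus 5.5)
The claim is existential ("there exist distinct $\{U_t\}$ and $\{\tilde U_t\}$"), so I will prove it by an explicit counterexample. The construction exploits the fact that the least-squares estimator in Equation~\eqref{eq:encoder_est} only sees the projection of $u_s$ onto the span of the realized regressors $\{x_s\}_{s\in\mathcal{W}_t}$. Whatever a policy does off that span, or in directions orthogonal to the data, is invisible to $\hat E_t$. The plan is to build two policies that agree on everything the regression can see but differ elsewhere.

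\textbf{Step 1: choose the dimensions and the first policy.} Take $d \ge 2$. Let the user policy be linear plus noise, $U_t(x,\xi) = E x + \xi$, with a fixed matrix $E \in \mathbb{R}^{m\times d}$. For the dynamics in Equation~\eqref{eq:general_dynamics}, pick a form whose trajectories stay in a proper subspace. For example, take $F_t(x, D_t, U_t) = A x$ with $A$ preserving a line $\mathrm{span}\{v\}$, and take $\epsilon_t$ supported on that same line. Start from $x_0 \in \mathrm{span}\{v\}$. Then every $x_s$ lies on $\mathrm{span}\{v\}$, regardless of $D_t$.

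\textbf{Step 2: define the second policy.} Set $\tilde U_t(x,\xi) = (E + B)x + \xi$, where $B \neq 0$ satisfies $Bv = 0$. Such a $B$ exists because $d\ge 2$. The two policies differ as maps, for instance at any $x \notin \mathrm{span}\{v\}$, so they are distinct. On the realized trajectory, however, they produce the same outputs $u_s$ for the same noise realization. Because $F_t$ may depend on $U_t$ only through $u_t$ (or not at all in this construction), the two closed-loop trajectories coincide pathwise. The regression data $\{(x_s,u_s)\}$ are therefore identical in every window.

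\textbf{Step 3: show the estimates agree.} The design matrix in each window is rank-deficient, so the minimizer in Equation~\eqref{eq:encoder_est} is not unique. I will fix the convention used in practice, the minimum-norm (pseudoinverse) solution $\hat E_t = U X^{+}$. Since this is a function of the data alone, identical data give identical $\hat E_t$, and hence $\mathcal{M}(\{U_t\}) = \mathcal{M}(\{\tilde U_t\})$. Hypothesis 1 (time-varying $D_t$) is not used to break the argument. I will note that letting $D_t$ vary is harmless, since $D_t$ enters only through $F_t$, which is the same in both systems.

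I expect the main obstacle to be persuading the reader that the example is not degenerate, since a rank-deficient regressor might seem contrived. As a second, more robust construction I would therefore use the finite-sample hypothesis 3 with full-rank data. Take a nonlinear $\tilde U_t$ that equals $E x$ plus a perturbation $g(x)$ that vanishes on the finitely many realized points $x_s$, for example a smooth bump placed away from them. This works pathwise for a given noise realization. It also shows that closed-loop excitation depending on $D_t$ governs which perturbations are detectable, which connects back to the intuition in the Identifiability Problem section.
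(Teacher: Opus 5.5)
Your proof is correct, but it establishes observational equivalence at a different level than the paper does.

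The paper pits a fixed user $u_t = E^\star x_t$ against an adapting one, $u_t = (E^\star + \Delta_t)x_t$. It chooses $\Delta_t$ so that $\sum_{s\in\mathcal{W}_t}\Delta_s x_s x_s^\top = 0$. The two systems then share the OLS cross-moments, and the normal equations return $E^\star$ in both, even though the EMG outputs may differ (the perturbation is absorbed as a residual orthogonal to the regressors).

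You instead make the perturbation vanish on the realized trajectory ($Bx_s = 0$, or $g(x_s) = 0$). The raw window data then coincide pathwise, so any function of the data, OLS or otherwise, agrees. Your condition makes each term of the paper's moment sum vanish individually; it is the ``collinearity'' case the paper mentions in passing.

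Your route buys two things:
\begin{itemize}
\item a stronger conclusion: the whole dataset is uninformative, not just the regression summary;
\item a cleaner treatment of the loop: you argue the trajectories coincide because $F_t$ sees the policy only through $u_t$. The paper imposes its moment condition on System B's own regressors without addressing that those regressors depend on $\Delta_t$ through $F_t$, and it only asserts that a suitable $\Delta_t$ exists.
\end{itemize}
Data-level equivalence would also survive a decoder updated from the window data, which moment-level equivalence need not.

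The paper's construction buys two things of its own:
\begin{itemize}
\item interpretive fit: System B is time-varying and System A is not, so it directly exhibits an adapting user indistinguishable from a fixed one, which is the claim the Discussion relies on;
\item it shows the regression summary discards information even when the EMG differs.
\end{itemize}

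Your two policies are both time-invariant, and your first example is effectively open-loop, since its state ignores $D_t$ and $u_t$. Both issues are cosmetic. In your first construction, take $B_t$ time-varying with $B_t v = 0$. Better, in your second construction use $g_t$ with $g_t(x_t) = 0$. That version works for any $F_t$ that depends on the policy only through $u_t$, and it removes any concern about infinitely many realized points. Either change gives the adapting-versus-fixed version in a genuinely closed-loop system.
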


\textit{Proof (by construction).}
We construct two observationally equivalent systems.

\textit{System A (fixed user).} Let the user policy be time-invariant:
\[
  u_t = E^\star x_t,
\]
with fixed matrix $E^\star$. The state evolves according to Equation~\eqref{eq:general_dynamics}, which induces a sequence of state distributions $\{x_t\}$ that depend on the decoder trajectory $\{D_t\}$.

\textit{System B (time-varying user).} Define a perturbed user policy
\[
  u_t = (E^\star + \Delta_t) x_t,
\]
where $\Delta_t$ is chosen such that, over each window $\mathcal{W}_t$, the joint empirical moments satisfy
\begin{equation}
  \sum_{s \in \mathcal{W}_t} u_s x_s^\top
  =
  E^\star \sum_{s \in \mathcal{W}_t} x_s x_s^\top.
  \label{eq:moment_matching}
\end{equation}
Such a sequence $\{\Delta_t\}$ exists whenever the regressor covariance matrices are not perfectly constrained (e.g., under finite samples or collinearity). Under condition~\eqref{eq:moment_matching}, the regression estimator in Equation~\eqref{eq:encoder_est} yields
\[
  \hat E_t = E^\star
\]
for both systems.

Thus, the distinct policy sequences $\{U_t\}$ and $\{\tilde U_t\}$ generate identical encoder estimates $\{\hat E_t\}$. Hence, $\mathcal{M}$ is not injective.

\section{Discussion}

The non-identifiability limitation detailed herein has direct implications for interpretation. The experimental results from \citet{madduri2026computational} demonstrate well that manipulations of the decoder trajectory $\{D_t\}$ affect joint system behavior and the sequence of estimated encoders $\{\hat E_t\}$. However, without identification of $U_t$, these results do not \textit{uniquely} determine the user's internal objective, learning dynamics, etc. In particular, the mapping from observed changes in $\hat E_t$ to a specific cost function is not one-to-one. Multiple user models, including fixed policies under shifting state distributions, delayed adaptation rules, or even alternative regularization structures, can reproduce the same empirical signatures. Thus, the framework identifies properties of the closed-loop system, but does not uniquely identify the underlying user adaptation dynamics. Claims about predicting or shaping user behavior, thus, should be interpreted at the level of the joint system, rather than the user in isolation.

\section{Concluding Remarks}

Of note, this identifiability limitation is not intrinsic to the framework, but rather to the current experimental design. Identification of user adaptation simply requires additional structure. One approach might be to introduce exogenous perturbations that generate persistent excitation independent of the closed-loop dynamics. For example, augmenting the decoder with randomized probe inputs $\tilde D_t = D_t + \delta_t$, where $\delta_t$ is independent noise, yields variation in $x_t$ that could separate user response from state distribution effects. 

Alternatively, a latent state formulation, e.g., $z_{t+1} = f(z_t, x_t, D_t), u_t = g(z_t, x_t),$ combined with constraints on $f$ and $g$, would permit identification under standard observability conditions. 

Finally, longer horizon experiments without frequent decoder reinitialization may allow separation of transient and persistent adaptation components. These extensions would enable the framework to move from identifying co-adaptive trajectories to identifying the underlying user adaptation mechanism.

\bibliographystyle{plainnat}
\bibliography{refs}

\end{document}